\theoremstyle{plain}
\newtheorem{prop}{Proposition}
\begin{document}
\title{Multi-View Constraint Propagation with Consensus Prior Knowledge}
\author{}
\maketitle
\begin{abstract}
    In many applications, the pairwise constraint is a kind of weaker supervisory information which can be collected easily. The constraint propagation has been proved to be a success of exploiting such side-information. In recent years, some methods of multi-view constraint propagation have been proposed. However, the problem of reasonably fusing different views remains unaddressed. In this paper, we present a method dubbed Consensus Prior Constraint Propagation (CPCP), which can provide the prior knowledge of the robustness of each data instance and its neighborhood. With the robustness generated from the consensus information of each view, we build a unified affinity matrix as a result of the propagation. Specifically, we fuse the affinity of different views at a data instance level instead of a view level. This paper also introduces an approach to deal with the imbalance between the positive and negative constraints. The proposed method has been tested in clustering tasks on two publicly available multi-view data sets to show the superior performance.
\end{abstract}
\section{Introduction}

Empirically, the pairwise constraint is a kind of economic side-information that can be  collected easily from the user feedback. For a user or annotator, it is more convenient to judge whether two images should be in the same category than to classify or tag them. 

During the past decades, a series of important progresses in the utilization of the pairwise constraint have been proposed. The pairwise constraint is widely used as the side-information in the metric learning \cite{xing2002distance,kulis2012metric}, which shows a remarkable improvement in the performance. These methods only pay close attention to the data instances with constraints and the others  with no constraints will not be adjusted by the information. Whereas, in \cite{lu2008constrained} an approach of affinity propagation by Gaussian process is proposed and \citeauthor{eaton2010multi} proposed an approach of the constraint propagation based on the constrained k-Means \shortcite{eaton2010multi}. Using the idea of label diffusion \cite{zhou2004learning} for reference, some constraint propagation algorithms based on the diffusion have been proposed, such as Exhaustive and Efficient Constraint Propagation (E$^2$CP) \cite{lu2010constrained} and Symmetric Graph Regularized Constraint Propagation (SRCP) \cite{fu2011symmetric}. Subsequently, \citeauthor{fu2011multi} proposed the method Multi-Modal Constraint Propagation (MMCP) to extend  E$^2$CP to a multi-view situation  \shortcite{fu2011multi}.

In unsupervised or semi-supervised multi-view problems, a difficulty is there is no enough training data to learn the weights of different views. 
Since the weight reflects the importance of each view, some previous works proposed that the weights should be small if we have some prior knowledge that some views are noisy \cite{kumar2011co,liu2013multi}.
In MMCP , the prior probability of each graph is manually set, which means we need to decide the importance of each view by hand. 
The manual setting can be more difficult if there are too many views.
Some other previous works, like \cite{wang2009unified,xu2016discriminatively}, regard the weights as  variables in their objective function, which combine different views at a view level and can only be solved by iterative update.

Instinctively, it is more reasonable to learn the weights from the robustness of views. 
Some progresses have been made to create a robust affinity matrix for spectral clustering \cite{pavan2007dominant,premachandran2013consensus,zhu2014constructing}. These methods can  be regard as a criteria to judge whether a view is noisy, which makes it possible to have some prior knowledge of the views. 

 Therefore, in this paper, we propose a novel method called  Consensus Prior Constraint Propagation (CPCP), which learns the unified affinity with constraint propagation from the consensus information at a data instance level. 
 Different from the proposals expressed in earlier works, our work is established on the robustness of the neighborhood of data instances, rather than the robustness of each view. We focus our attention on the probabilities contained in the multi-view constraint propagation. We adopt Consensus k-NN \cite{premachandran2013consensus} to produce the conditional probability of each view given data instance. Consequently, we derive the unified transition probability and   affinity matrix from this probability. Moreover, we also introduce a method to balance the positive and negative  constraints based on the objective function of MMCP and provide a straightforward way to make use of the result of constraint propagation.

 The main contributions of our work is: 1) We propose a novel method to derive the importance of each view at a instance level from the consensus information. 2) We introduce a straightforward way to mitigate the imbalance between the must-link and the cannot-link in the constraint propagation. 3) In our approach, we can generate a unified affinity matrix as a result, rather than adjusting the affinity matrix from each view with the result of propagation with .

\section{Consensus Prior Constraint Propagation}
    In this subsection, we first briefly overview the framework of MMCP and the Consensus k-NN algorithm, followed by the details of our Consensus Prior Constraint Propagation.

\subsection{Multi-Modal Constraint Propagation}

Firstly, given a data set with a set of data instances $\mathcal{U} = \{u_1,\dots,u_n \}$, each data instance is denoted by $u_i$. Assume that there are $S$ different views in our multi-view data set, hence we can have $S$ graphs $G_s = (\mathcal{U},W_s)$, $s=1,\dots,S$, where $ W_s$ is the corresponding affinity matrix of view $s$ built by k-NN neighborhood with Gaussian kernel \cite{belkin2001laplacian}. The set $\mathcal{M} = \{(u_i,u_j)\}$ denotes the positive pairwise constraints that $u_i \text{ and } u_j$ should be in the same class, namely must-link. $\mathcal{C} = \{(u_i,u_j)\}$ denotes the set of negative pairwise constraints, namely cannot-link.

Taking the pairwise constraints into consideration, we also build a side-information matrix $Y$, 
\begin{equation}
	Y_{i,j} = 
    \begin{cases}
        1, \qquad&(u_i,u_j)\in \mathcal{M};\\
        -1, &(u_i,u_j)\in \mathcal{C};\\
        0, &\text{otherwise}.
    \end{cases}
    \label{eqY}
\end{equation}

With respect to each graph $G_s$, we build the diagonal matrix $D_s$ from $W_s$, with $D_{i,i,s} = \sum_j W_{i,j,s}$, as in \cite{chung1997spectral}. Then the transition probability on $G_s$ is 
    \begin{equation}
        P(u_i\rightarrow u_j|G_s) = P(u_j|u_i,G_s) = \frac{W_{i,j,s}}{D_{i,i,s}}
        \label{eqmmcp_trans}
    \end{equation}
and the probability of $u_i$ on $G_s$ is 
    \begin{equation}
        P(u_i|G_s) = \frac{D_{i,i,s}}{\sum_iD_{i,i,s}}.
        \label{eqPulg}
    \end{equation}

With $P(G_s)$, the manually decided prior probability of graph $G_s$, we can compute the unified transition probability $P(u_i\rightarrow u_j)$ and the 
probability of $u_i$, namely $P(u_i)$.
%
%
%

Define a matrix $\hat{L}$: 
\begin{equation}
	\hat{L} = \Pi - \frac{\Pi P+P^T\Pi}{2}
	\label{}
\end{equation}
, where $\Pi$ is a diagonal probability matrix with $i$-th diagonal element $P(u_i)$ and $P$ is the unified transition matrix with element $P(u_j|u_i)$.

If the result of the vertical pairwise constraint propagation is denoted by $F_v$. The optimization problem of vertical propagation is 
    \begin{equation}
        \mathop{\mathrm{min}}_{F_v}\;\frac{1}{2}\eta \mathrm{tr}((F_v-Y)^T\Pi (F_v-Y))+\frac{1}{2}\mathrm{tr}(F_v^T\hat{L}F_v).
        \label{eqmmcp}
    \end{equation}
Solving this optimization problem by differentiating, we can find the closed-form result of vertical propagation,
    \begin{equation}
        F_v = \eta(\eta \Pi+\hat{L})^{-1}\Pi Y.
        \label{}
    \end{equation}
The result of the horizontal propagation is similar. By combining the results of vertical and horizontal propagation, we attain the final result of the constraint propagation
    \begin{equation}
        F = \eta^2(\eta\Pi+\hat{L})^{-1}\Pi Y\Pi(\eta\Pi+\hat{L})^{-1}.
        \label{}
    \end{equation}

\subsection{Consensus k-NN}

The Consensus k-NN algorithm proposed in \cite{premachandran2013consensus} collects the consensus information of multiple rounds of k-NN neighborhood to provide a criteria for the neighborhood selection. If a pair of nodes $u_i$ and $u_j$ keeps appearing among the k-NN neighborhood of many of other nodes, the chance of these two nodes being similar is much higher. In contrast, if the distance between $u_i$ and $u_j$ is quite short but they never appear among the k-NN neighborhood of other nodes, it is more likely to be the  noise.
In Algorithm \ref{al_consknn} we show some details of the consensus matrix $C$ in Consensus k-NN. And there is a threshold $\tau$ in  Consensus k-NN. If the $C_{i,j}>\tau$, $u_i \text{ and }u_j$ will be contained in the neighborhood set of each other.

    \begin{algorithm}[t]
        \renewcommand{\algorithmicrequire}{\textbf{Input:}}
        \renewcommand{\algorithmicensure}{\textbf{Output:}}
        \caption{Consensus matrix of Consensus k-NNs}
        \label{al_consknn}
        \begin{algorithmic}[1]
           \STATE $C = 0$;
           \FOR {$i = 1:N$}
					 \FORALL {$u_j, u_k \text{ such that } u_j, u_k \in \text{ k-NN}(u_i)$}
           \STATE $C_{j,k} = C_{j,k}+1$;
           \STATE $C_{k,j} = C_{k,j}+1$;
           \ENDFOR
           \ENDFOR

        \end{algorithmic}
    \end{algorithm}

\subsection{Consensus Prior Knowledge}

The consensus k-NN used in our approach is a little different from the algorithm proposed in \cite{premachandran2013consensus}. We implement consensus k-NN to prune noise edges from the existing affinity matrix, rather than build new neighborhood sets. We set
    \begin{equation}
        W^{cons}_{i,j} = 
        \begin{cases}
            0,\quad\qquad\qquad&\text{if }C_{i,j}<\tau;\\ W^{dense}_{i,j},&\text{otherwise.}
        \end{cases}
        \label{eqcons}
    \end{equation}
    , where $W^{dense}$ is a dense affinity matrix build from k-NN neighborhood and $W^{cons}$ is the consensus k-NN affinity matrix. Since the affinity matrix we build from the k-NN neighborhood is relatively dense, which we will elaborate in following paragraphs, the algorithm we implement in our approach can generate an approximate result  and be more efficient.

    In CPCP, the conditional probability of each graph $P(G_s|u_i)$ is generated from the consensus information and most of the other probabilities are derived from such information. Since the prior knowledge of the views are derived from the consensus, we name it consensus prior knowledge. As the diagram Fig. \ref{fig:diag1} shows, the unified transition probability of multiple views can be attained once we obtain the pivotal probability $P(G_s|u_i)$.

    \begin{figure}[t]
        \centering
        \includegraphics[width = 0.8\columnwidth]{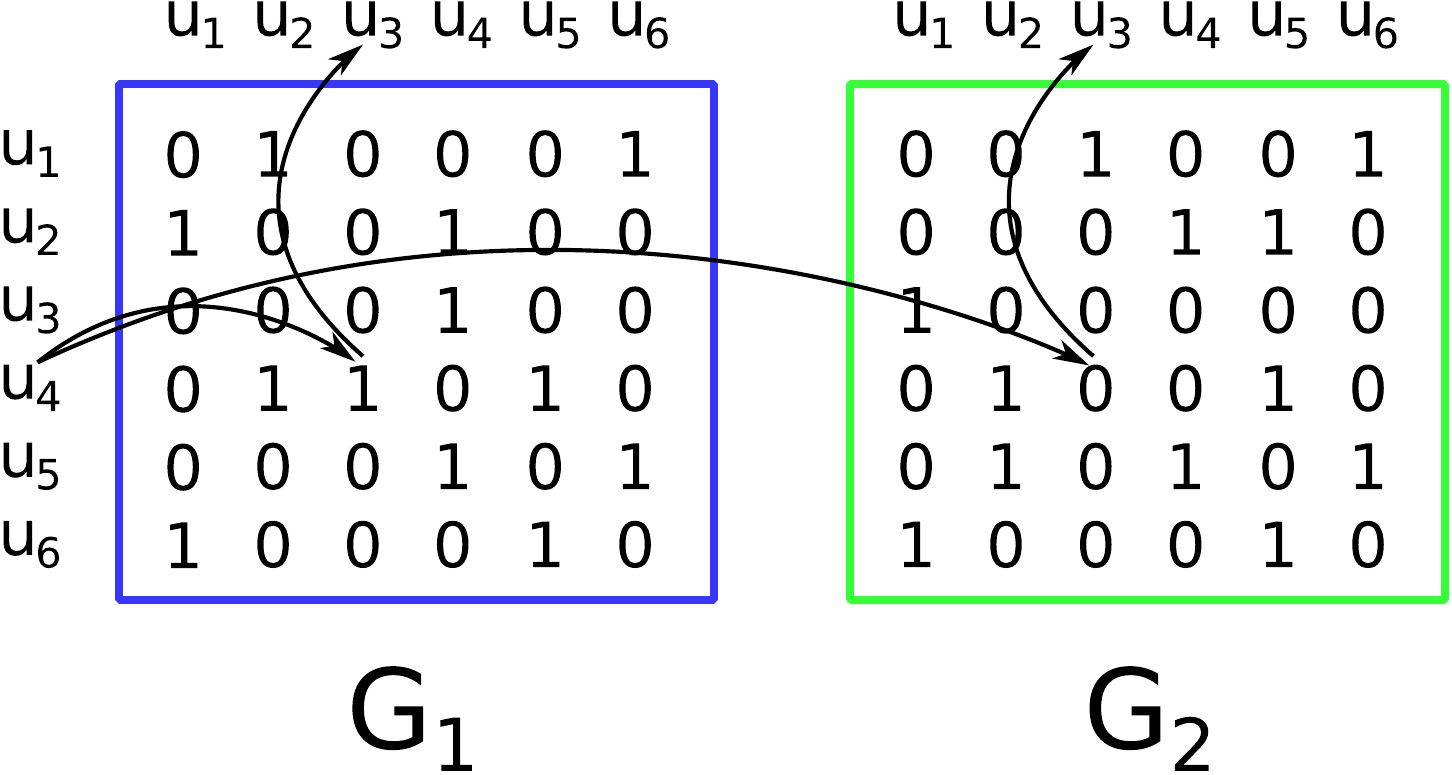}
				\caption{A diagram of transition probability of multiple views. The horizontal arrows indicate the conditional probability of each graph and vertical arrows indicate the internal transition probability of each view. In this example we have $P(u_4\rightarrow u_3) = P(u_3|u_4,G_1)P(G_1|u_4)+P(u_3|u_4,G_2)P(G_2|u_4)$.}
        \label{fig:diag1}
    \end{figure}

    With respect to each graph $G_s$, we construct the consensus k-NN affinity matrix $W^{cons}_s$ according to Eq. \ref{eqcons}. Subsequently we can define the consistency of each data instance in different views by the KL-divergence:
\begin{equation}
	c_s(i) = \sum_j W^{dense}_{i,j,s}\;\text{ln}\frac{W^{dense}_{i,j,s}}{W^{cons}_{i,j,s}}
    \label{eqconsistency}
\end{equation}
, where the $W^{dense}_{i,j,s}$ is the k-NN neighborhood affinity between data $u_i$ and $u_j$ in the graph $G_s$ and $W^{cons}_{i,j,s}$ is the corresponding consensus k-NN affinity. Both $W_s^{dense}$ and $W_s^{cons}$ are normalized with row summation one, to make each row as probability distribution. Here the consistency $c_s(i)$ is a metric of the robustness of the relationship between data instance $u_i$ and its neighborhood in graph $G_s$. The smaller is the consistency, the more stable is the relationship. Because of the sparsity of the affinity matrices, two essential tricks are performed in our experimental. One is that the k-NN neighborhood affinity $W^{dense}_s$ are relatively dense matrices with $k = \#sample/\#cluster$. This setting is also necessary in the following probabilities. The other one is that we add a minute quantity (such as $10^{-8}$) to each element of $W_s^{dense}$ and $W_s^{cons}$ to avoid overwhelming zeros in the discrete probability.

Once we have the consistency, we can obtain the conditional probability of graph $G_s$ given $u_i$ as follows.
\begin{equation}
    P^\dagger (G_s|u_i) = \frac{(c_s(i)+1)^{-1}}{\sum_i (c_s(i)+1)^{-1}}
    \label{eqPglu}
\end{equation}
From Eq. \ref{eqPglu} we can notice that with a given view $s$ a smaller consistency implicates a greater conditional probability of graph $G_s$. The conditional probability $P^\dagger(u_i|G_s)$ is calculated in the same way as Eq. \ref{eqPulg}. 

It needs to pay attention here that the superscript $\dagger$ of $P^\dagger (G_s|u_i)$ and $P^\dagger (u_i|G_s)$ means the conditional probabilities are not proper. They are pseudo-conditional probability distributions. It is because that there may not exist $P(u_i), P(G_s)$ and $P(u_i,G_s)$ which can generate such two conditional distributions. $P^\dagger(G|u)$ and $P^\dagger(u|G)$ do not satisfy such causality constraint of the pair of conditional distribution and it makes these two distributions illegal. Under the condition that there is no zero in these discrete conditional probability distributions, we have the following proposition.
\begin{prop}
    Define a quotient matrix $Q$, where $Q_{i,s} = \frac{P^\dagger(u_i|G_s)}{P^\dagger(G_s|u_i)}$. $P^\dagger(G|u)$ and $P^\dagger(u|G)$ are legal conditional probability distributions if and only if the quotient matrix $Q$ has rank 1 and $\sum_s\frac{1}{\sum_i Q_{i,s}} = 1$.
		\label{prop1}
\end{prop}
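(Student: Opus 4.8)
The plan is to rewrite ``legal'' as an explicit factorization requirement on the two pseudo-conditionals and then read off the two stated conditions. Write $A_{i,s}=P^\dagger(G_s|u_i)$ and $B_{i,s}=P^\dagger(u_i|G_s)$. All entries of $A,B$ are positive by hypothesis; recall also that for each $i$ the numbers $\{A_{i,s}\}_s$ form a probability distribution over views, so $\sum_s A_{i,s}=1$, and for each $s$ the numbers $\{B_{i,s}\}_i$ form a probability distribution over instances, so $\sum_i B_{i,s}=1$. I claim the pair is legal precisely when there exist strictly positive probability vectors $\big(P(u_i)\big)_i$ and $\big(P(G_s)\big)_s$ with $P(u_i)\,A_{i,s}=P(G_s)\,B_{i,s}$ for all $i,s$: given such vectors, the table $J_{i,s}:=P(u_i)A_{i,s}$ satisfies $\sum_s J_{i,s}=P(u_i)$, $\sum_i J_{i,s}=P(G_s)$ and $\sum_{i,s}J_{i,s}=1$ by the row/column stochasticity of $A$ and $B$, so $J$ is a genuine joint distribution whose two conditionals are exactly $A$ and $B$; conversely the marginals of any joint witnessing legality are such vectors. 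Since everything is positive, $P(u_i)A_{i,s}=P(G_s)B_{i,s}$ is equivalent to $Q_{i,s}=B_{i,s}/A_{i,s}=P(u_i)/P(G_s)$, so the statement reduces to: such probability vectors exist if and only if $\mathrm{rank}(Q)=1$ and $\sum_s 1/\big(\sum_i Q_{i,s}\big)=1$.

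For the forward direction, if $Q_{i,s}=P(u_i)/P(G_s)$ then $Q$ is the outer product of $\big(P(u_i)\big)_i$ with $\big(1/P(G_s)\big)_s$, hence $\mathrm{rank}(Q)=1$; and $\sum_i Q_{i,s}=\big(\sum_i P(u_i)\big)/P(G_s)=1/P(G_s)$, so $\sum_s 1/\big(\sum_i Q_{i,s}\big)=\sum_s P(G_s)=1$.

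For the converse, use $\mathrm{rank}(Q)=1$ together with positivity to factor $Q_{i,s}=x_i y_s$ with all $x_i,y_s>0$. Define $P(u_i):=x_i/\sum_{i'}x_{i'}$ and $P(G_s):=1/\sum_i Q_{i,s}$. These are positive, $\sum_i P(u_i)=1$ by construction, and $\sum_s P(G_s)=\sum_s 1/\big(\sum_i Q_{i,s}\big)=1$ is exactly the second hypothesis; using $\sum_i Q_{i,s}=\big(\sum_{i'}x_{i'}\big)y_s$ a one-line computation gives $P(u_i)/P(G_s)=x_iy_s=Q_{i,s}$. By the reduction above, the pair is legal.

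The bookkeeping with the stochasticity of $A$ and $B$ is routine; the one step carrying real content is the normalization in the converse. The rank-$1$ factorization of $Q$ is unique only up to an overall scaling, and that single degree of freedom gets spent making $\big(P(u_i)\big)_i$ a probability vector --- after which it is \emph{not} automatic that $\big(P(G_s)\big)_s$ also sums to one. Recognizing that this leftover constraint is precisely $\sum_s 1/\big(\sum_i Q_{i,s}\big)=1$, and checking that positivity forbids any degenerate sign or scale choice, is the heart of the argument; everything else is substitution.
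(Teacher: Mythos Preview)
Your proof is correct and follows essentially the same route as the paper: both directions hinge on writing $Q_{i,s}=P(u_i)/P(G_s)$, reading off rank~$1$ and the sum identity in the forward direction, and in the converse factoring $Q$ as an outer product, spending the one scaling degree of freedom to make $(P(u_i))_i$ a probability vector, and then using the hypothesis $\sum_s 1/\sum_i Q_{i,s}=1$ to force $(P(G_s))_s$ to be one as well. Your write-up is in fact tidier than the paper's on two points: you explicitly invoke the row/column stochasticity of $A$ and $B$ to check that $J_{i,s}=P(u_i)A_{i,s}$ really has the right marginals, and your direct definition $P(G_s):=1/\sum_i Q_{i,s}$ avoids an index slip present in the paper's version.
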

\begin{proof}
    $(\Rightarrow)$:\quad Since $P^\dagger(G|u)$ and $P^\dagger(u|G)$ are legal conditional probability distributions, we can always find the corresponding probability distribution $P(u), P(G)$ and $P(u, G)$. Therefore we have 
    \begin{equation}
        Q_{i,s} = \frac{P^\dagger(u_i|G_s)}{P^\dagger(G_s|u_i)} = \frac{P(u_i)}{P(G_s)}.
        \label{eqq}
    \end{equation}
    Given row $i$ of the matrix $Q$, row $j$ can be determined by multiplying $\frac{P(u_i)}{P(u_j)}$ by row $i$. It leads to a rank 1 matrix $Q$. 
    Under the condition Eq. \ref{eqq}, we also have
    \begin{equation}
        \sum_s\frac{1}{\sum_i Q_{i,s}} = \sum_s\frac{1}{\sum_i \frac{P(u_i)}{P(G_s)}} = \sum_s P(G_s) = 1    
        \label{}
    \end{equation}

    $(\Leftarrow)$:\quad Assume $Q$ is a rank 1 matrix, $Q$ can be factorized as a product of two positive vectors, which gives $Q = mn^T$. From the factorization we can derive the unique $m$ and $n$, if we set $\sum_i m_i = 1$. By substitution of $Q_{i,s} = m_in_s$ into the equation $\sum_s\frac{1}{\sum_i Q_{i,s}} = 1$, it leads to the result $\sum_s\frac{1}{n_s} = 1$.
Considering the property of $m$ and $n$, we can interpret these two vectors as two marginal probability distributions:
    \begin{equation}
        P(u_i) = m_i, \quad P(G_s) = n_s
        \label{}
    \end{equation}

    Now we need to prove there exists a joint probability distribution $P(u, G)$ that satisfies $P(u, G) = P^\dagger(G|u)P(u) = P^\dagger(u|G)P(G)$. 
		$P^\dagger(G|u)$ and $P(u)$ are two probability distributions, thus the product of them is a probability distribution. Similarly $P^\dagger(u|G)P(G)$ is also a probability distribution. Hence our goal is to prove that $P^\dagger(G_s|u_i)P(u_i) = P^\dagger(u_i|G_s)P(G_s)$ holds for all $i$ and $s$.
    Making use of $Q_{i,s} = m_in_s$, we then obtain that 
    \begin{equation}
        \begin{split}
            &\frac{P^\dagger(u_i|G_s)}{P^\dagger(G_s|u_i)} = Q_{i,s} = m_in_s\\
            \Rightarrow &\frac{P^\dagger(u_i|G_s)}{n_i} = P^\dagger(G_s|u_i)m_s\\
            \Rightarrow &P^\dagger(u_i|G_s)P(G_s) = P^\dagger(G_s|u_i)P(u_i)
        \end{split}
        \label{}
    \end{equation}
    , which shows that $P^\dagger(G|u)$ and $P^\dagger(u|G)$ are a pair of legal conditional probability distributions.

\end{proof}

With Proposition \ref{prop1}, we notice that if we want the $P^\dagger(G|u)$ and $P^\dagger(u|G)$ to be the legal conditional distributions and derive $P(u_i), P(G_s)$ from them. We need the quotient matrix $Q$ to have rank 1. Here we ignore the condition  $\sum_s\frac{1}{\sum_i Q_{i,s}} = 1$, which can be satisfied by scaling. Generally, $Q$ will not  be a rank 1 matrix. However, if the $P^\dagger(G|u)$ and $P^\dagger(u|G)$ are approximate to the true value, $Q$ will be similar to a rank 1 matrix. Particularly, we apply singular value decomposition to the matrix $Q$ to obtain a approximation $\hat{Q}$ with rank 1. As a result of new quotient matrix $\hat{Q}$, only the $P(u_i), P(G_s)$ can be derived from $\hat{Q}$ directly and we rebuild $P(G|u)$,  $P(u|G)$ and the corresponding $P(u, G)$ with the constraint  $\hat{Q}$. With following optimization problem, we aim to solve proper $P(G|u)$ and $P(u|G)$ as similar to $P^\dagger(G|u)$ and $P^\dagger(u|G)$ as possible.

    \begin{equation}
        \begin{split}
					\mathop{\mathrm{min}}_{P(u_i|G_j), P(G_j|u_i)} & \quad \frac{1}{2} \alpha \sum_{i,j}(P(u_i|G_j)-P^\dagger(u_i|G_j))^2 \\
					&+ \frac{1}{2} \beta \sum_{i,j}(P(G_j|u_i)-P^\dagger(G_j|u_i))^2\\
						\mathrm{s.t.}\quad \quad \quad&\quad \frac{P(u_i|G_j)}{P(G_j|u_i)} = \hat{Q}_{i,j}\\
        \end{split}
        \label{eqOpt}
    \end{equation}

		We remove the normalization constraints  $\sum_{i}P(u_i|G_j) = 1$ and $\sum_{j}P(G_j|u_i) = 1$ in Eq.\ref{eqOpt}. With such a relaxation, this optimization problem can be solved element by element efficiently, and the normalization process will be finished after the optimization. In order to balance two part of the objective function, we set $\alpha = \frac{1}{\sum_{i,j}P(u_i|G_j)^2}$ and $\beta = \frac{1}{\sum_{i,j}P(G_j|u_i)^2}$.
		By solving this optimization problem, we can obtain a closed-form solution.


		Finally we obtain the unified affinity matrix with Eq. \ref{eqmmcp_trans} by 
		\begin{equation}
			W_{i,j} = P(u_i,u_j) = P(u_i)\sum_s P(u_j|u_i, G_s)P(G_s|u_i)
		\end{equation}
        , and the matrix is sparsified as a k-NN neighborhood.

		\subsection{Balance Between Must-link and Cannot-link}
		
    Here, we separate constraint matrix $Y$ into two parts $Y = Y_++Y_-$, $Y_+$ for the positive elements and $Y_-$ for the negative ones. Then we have
    \begin{equation}
        \begin{split}
             &\mathrm{tr}((F_v - Y )^T\Pi(F_v - Y))\\
            =\; & \mathrm{tr}(F_v^T\Pi F_v+Y_+^T\Pi Y_++Y_-^T\Pi Y_- -2F_v^T\Pi Y_+ \\
            &- 2F_v^T\Pi Y_- + 2Y_+^T\Pi Y_- )\\
        \end{split}
        \label{fpif}
    \end{equation}
		Since in Eq. \ref{fpif} we have $Y_+^T\Pi Y_- = \bf{0}$, we can obtain that Eq. \ref{fpif} equals to
    \begin{equation}
        \begin{split}
						\;&\mathrm{tr}((F_v-Y_+)^T\Pi(F_v-Y_+))\\
            &+\mathrm{tr}((F_v-Y_-)^T\Pi(F_v-Y_-))-\mathrm{tr}(F_v^T\Pi F_v)
        \end{split}
        \label{fpif2}
    \end{equation}
		
		As we separate  constraints into two parts, we can weight the positive part with a parameter based on ratio of must-link to cannot-link $\alpha = \sqrt{\#negative/\#positive}$. Our objective function is obtained by substituting the weighted constraint into Eq. \ref{eqmmcp} to give
    \begin{equation}
        \begin{split}
            \mathop{\mathrm{min}}_{F_v}\quad&\frac{1}{2}\alpha\eta \mathrm{tr}((F_v-Y_+)^T\Pi(F_v-Y_+))\\
            &+\frac{1}{2}\eta \mathrm{tr}((F_v-Y_-)^T\Pi(F_v-Y_-))\\
            &+\frac{1}{2}\mathrm{tr}(F_v^T(L-\eta \Pi)F_v)
        \end{split}
        \label{obj2}
    \end{equation}
		, where the $L$ is different from the $\hat{L}$ in Eq. \ref{eqmmcp} and $L = \Pi-W$.
		By differentiating the function in Eq.\ref{obj2} with respect to $F_v$ and setting it to zero as \cite{fu2011multi} did, we can obtain the matrix $F_v$ after vertical propagation on multiple graphs
    \begin{equation}
        \begin{split}
            & F_v = \eta(L+\alpha\eta\Pi)^{-1}\Pi(\alpha Y_++Y_-)
        \end{split}
        \label{}
    \end{equation}
		
		By combining the results of vertical and horizontal propagation, the final result of the constraint propagation is 
		\begin{equation}
			\begin{split}
				& F = \eta^2(L+\alpha\eta\Pi)^{-1}\Pi(\alpha Y_++Y_-)\Pi(L+\alpha\eta\Pi)^{-1}
			\end{split}
			\label{}
		\end{equation}

        \subsection{Affinity with Constraint Propagation}
        Most of the previous works employ the constraint propagation result $F$ to refine the k-NN neighborhood affinity matrix. The adjustment of the affinity matrix leads to a sparse matrix as the result of their approaches. Different from them, there is no adjustment in our approach. We instinctively generate the final affinity matrix from the $F$ itself. The affinity with constraint propagation $W^*$ is build with the Sigmoid activation of $F$, 
        \begin{equation}
            W^*_{i,j} = 
            \begin{cases}
                \frac{1}{1+\text{exp}(-F_{i,j}/\sigma)} \qquad &\text{if }F_{i,j}>0;\\
                0 &\text{otherwise}
            \end{cases}
            \label{}
        \end{equation}
				, where $\sigma$ is the average magnitude of the elements in $F$. The $W^*$ is not a sparse matrix as the other methods, the number of zeros in $W^*$ is related to the  ratio of the positive constraints to the negative ones. Finally, we employ the spectral clustering \cite{von2007tutorial} on the affinity $W^*$ to form the clusters.

\begin{table}[t]
	\caption{Description of data sets}
	\label{tab_data}
	\centering
		\begin{tabular}{c l l}
			\hline
			View & Corel 5k & PASCAL VOC'07\\
			\hline
            1& Lab (4096) & Lab (4096) \\
            2& DenseSift (1000) & DenseSift (1000) \\
            3& annot (260)& tags (804) \\
            4& Hsv (4096) & Hsv (4096) \\
            5& Gist (512) & Gist (512) \\
            6& RgbV3H1 (5184) & RgbV3H1 (5184) \\
            7& HarrisHueV3H1 (300) & HarrisHueV3H1 (300) \\
            8& HsvV3H1 (5184) & HsvV3H1 (5184) \\
            \hline
            Images &4999 & 9963 \\
            Classes &50 & 20 \\
			\hline
		\end{tabular}
\end{table}

\section{Experimental Results}

In this section, we conduct several experiments to demonstrate the performance of the proposed approach CPCP on two benchmark data sets. We compare our CPCP algorithm with some state-of-the-art methods. The clustering result of Multi-Modal Constraint Propagation (MMCP) \cite{fu2011multi} and the method for single source data Exhaustive and Efficient Constraint Propagation (E$^2$CP) \cite{lu2010constrained} are also reported in the this section. Moreover, we use the Normalized Cuts \cite{shi2000normalized} with no pairwise constraints as the baseline method in the evaluation of clustering performance. There are some more recent multi-view constraint propagation methods Unified Constraint Propagation (UCP) \cite{lu2013unified} and Multi-Source Constraint Propagation (MSCP) \cite{lu2013exhaustive}) which also have satisfactory performance, but these methods are designed specifically for the case of 2 views and are difficult to extend to more views. Hence they are not included in our experiments.

\subsection{Data Sets}
We consider two benchmark image data sets with the textual description, which have been used in previous work.

{\bf Corel 5k}. This data set is an important benchmark which has been widely used in many tasks. It contains 50 classes with approximate 5000 image. Each image is annotated with several keywords from a dictionary of 260 words.

{\bf PASCAL VOC'07}. This data set \cite{pascal-voc-2007} contains 20 different object categories and around 10000 images. All of the images of PASCAL VOC'07 are annotated with one or more categories.   

For the ease of experimental results reproduction or direct comparison, we employ the publicly available features, INRIA features \cite{guillaumin2009tagprop}, of these two data sets instead of extracting features by ourselves. 
Tab. \ref{tab_data} summarizes a part of the features that we used for the clustering performance evaluation in our experiments.

\begin{figure}[t]
    \centering
    \subfigure[Corel 5k]{
        \includegraphics[width = 0.47\columnwidth]{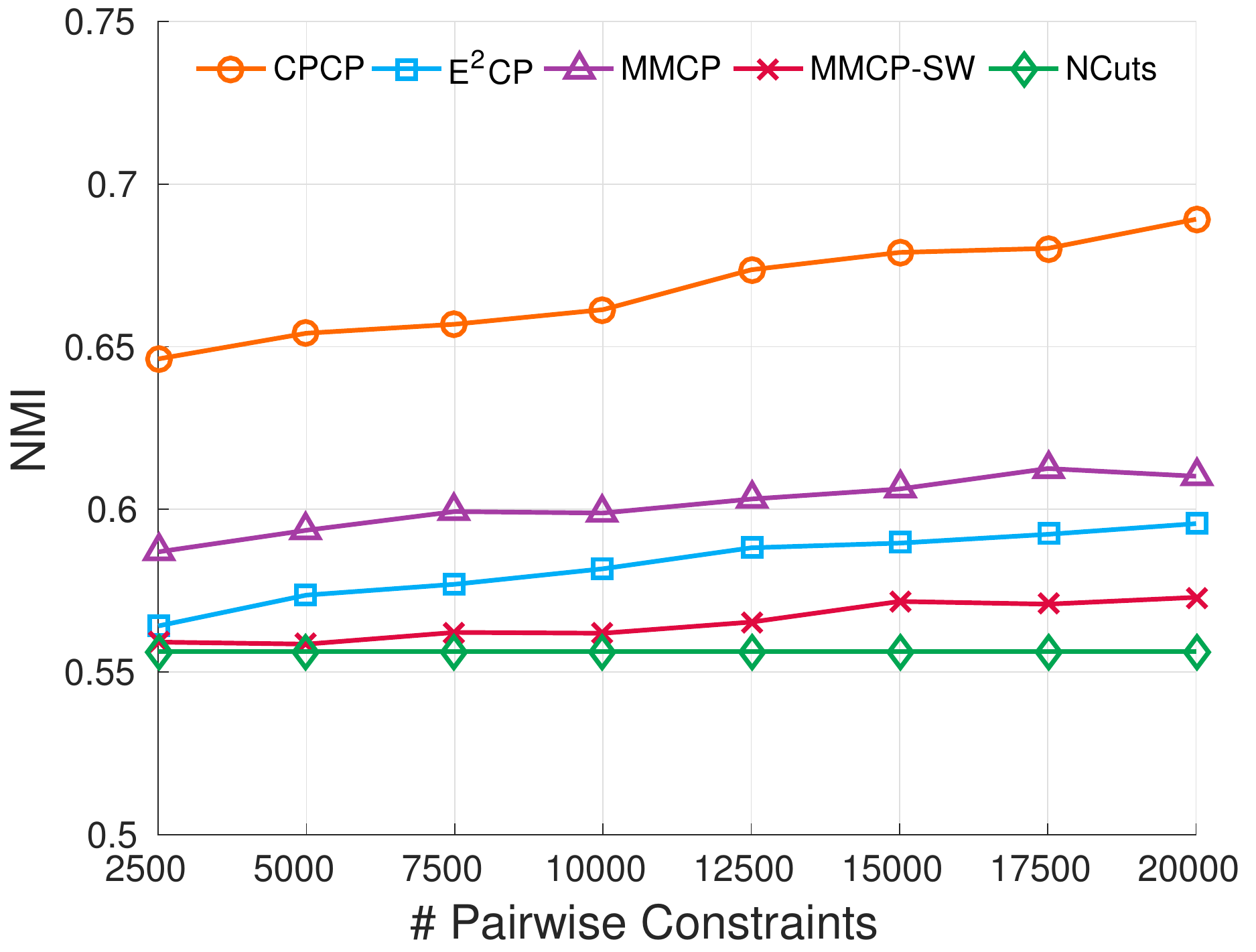}
        \label{fig:3viewnmi1}
    }
    \subfigure[PASCAL VOC'07]{
        \includegraphics[width = 0.47\columnwidth]{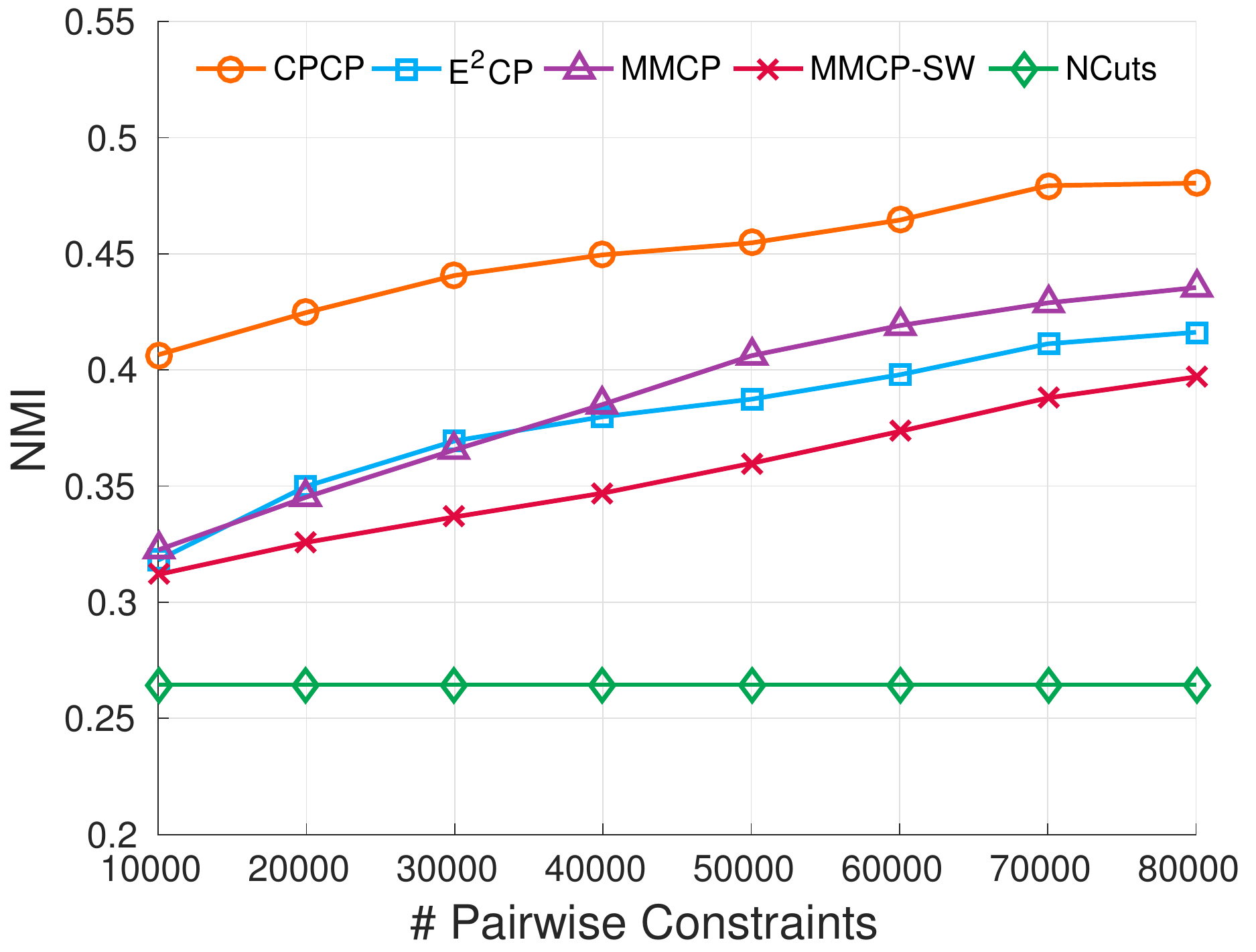}
        \label{fig:3viewnmi2}
    }
    \caption{Clustering result on three views of Corel 5k and PASCAL VOC'07 with different number of constraints.}
    \label{fig:3viewnmi}
\end{figure}

\begin{figure}[t]
    \centering
    \subfigure[Corel 5k]{
        \includegraphics[width = 0.47\columnwidth]{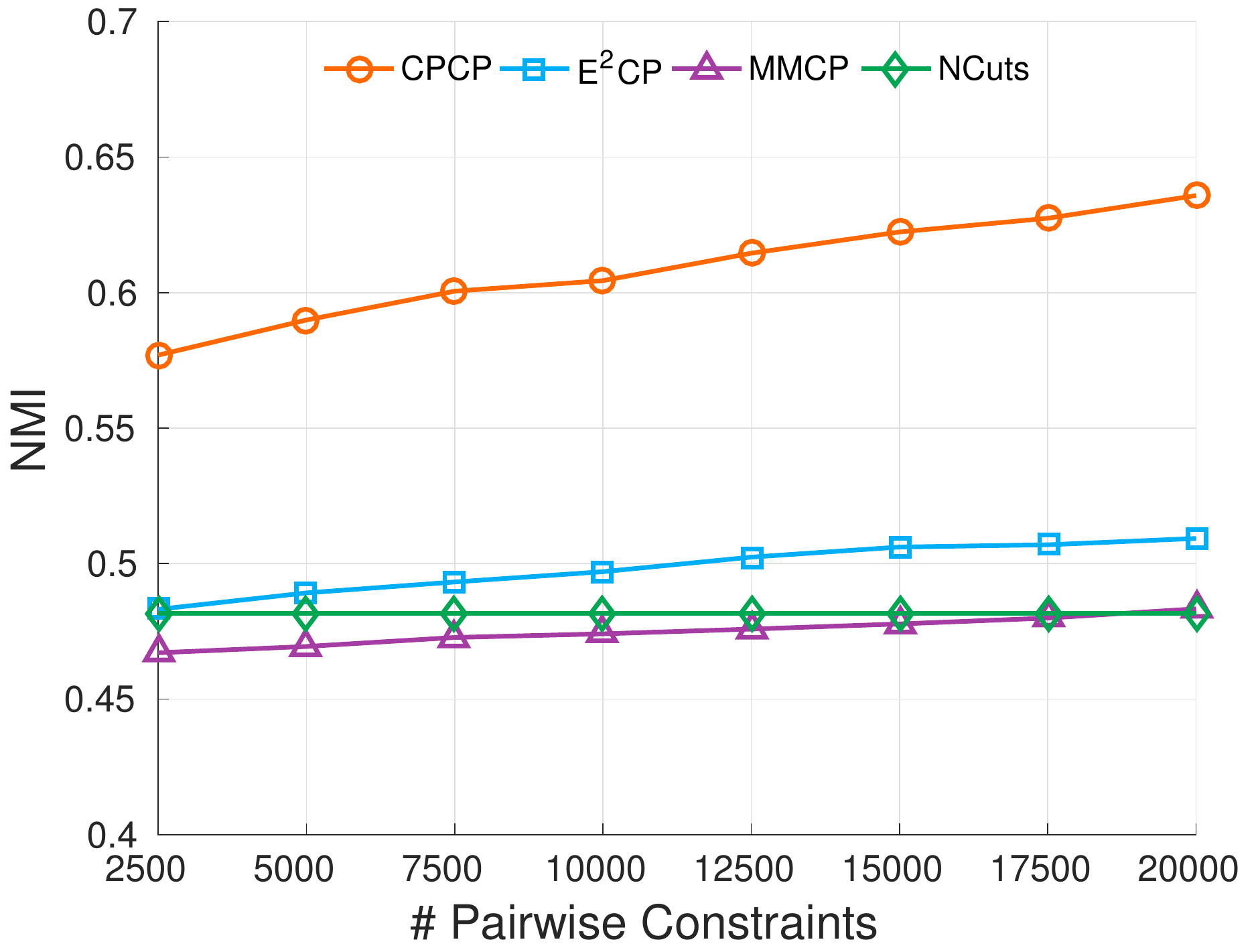}
        \label{fig:8viewnmi1}
    }
    \subfigure[PASCAL VOC'07]{
        \includegraphics[width = 0.47\columnwidth]{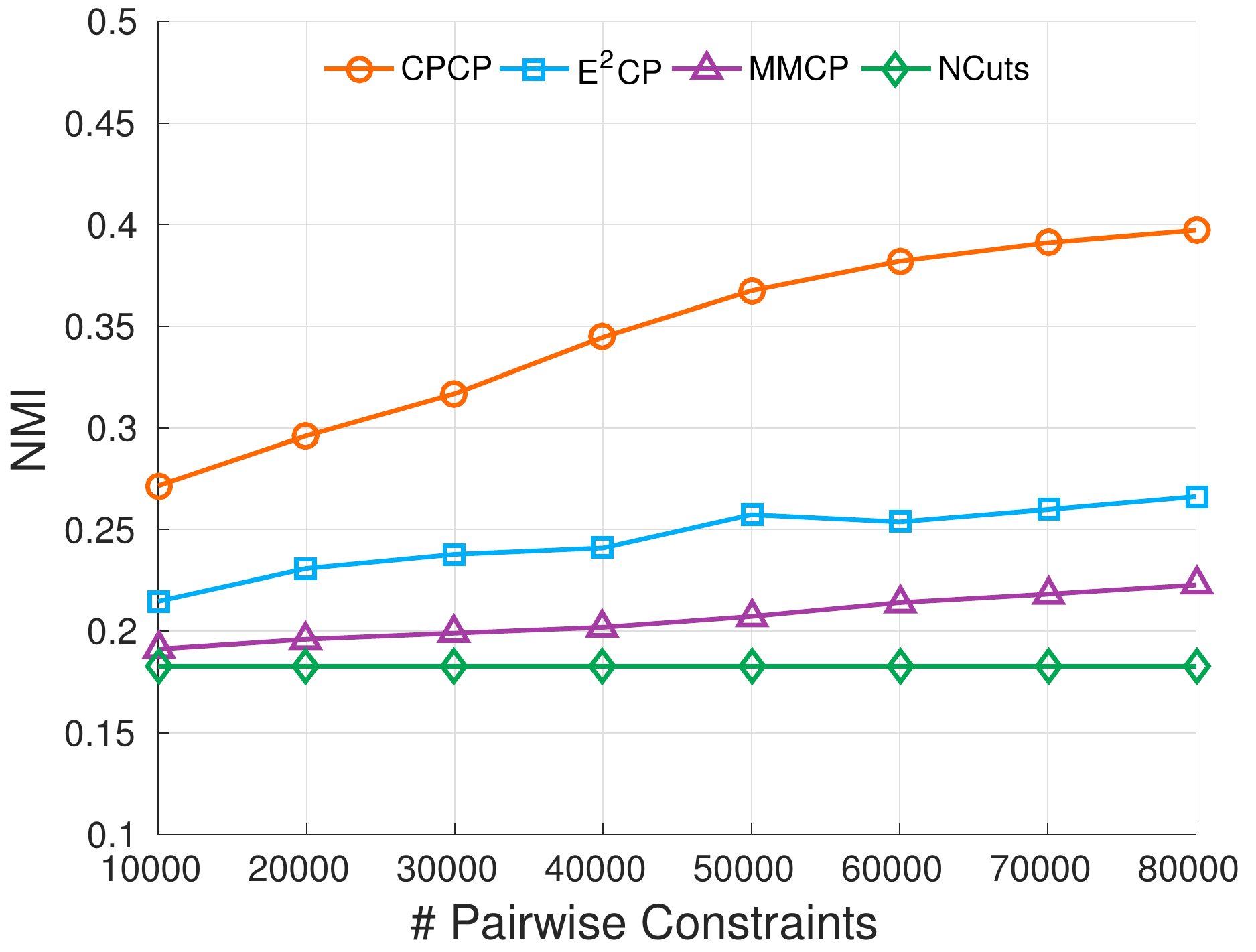}
        \label{fig:8viewnmi2}
    }
    \caption{Clustering result on eight views of Corel 5k and PASCAL VOC'07 with different number of constraints.}
    \label{fig:8viewnmi}
\end{figure}

\subsection{Experiment Setup}

In the experiments of clustering evaluation, we use one textual feature and seven other image features of INRIA features. All the image features are normalized with zero mean and scaled to $[-1,1]$. The affinity matrix of image features are computed by Gaussian kernel and the affinity of textual feature is produced from $cosine$ distance. It is worth noting that about one-third of images in  PASCAL VOC'07 don't have any tag. Thus, there will be around 3000 zero rows in the affinity matrix of this view. To deal with such blank feature, we add some random noise, which is minute and can be ignored, to every dimension of these data instance. 

\begin{table*}[t]
    \addtolength{\tabcolsep}{3pt}
	\caption{Clustering performance (NMI) on of E$^2$CP and MMCP on Corel 5k and PASCAL VOC'07}
	\label{tab_e2cp}
	\centering
		\begin{tabular}{|l| c| c| c| c| c| c| }
			\hline
            \multirow{2}{100pt}{\centering Method} &\multicolumn{3}{ |c }{Corel 5k} &\multicolumn{3}{ |c |}{PASCAL VOC'07}
			\\
			\cline{2-7}
			& Avg & Max & Min & Avg & Max & Min \\
			\hline
			E$^2$CP-Lab           & 0.3510     &   0.3573   &   0.3460    & 0.0929     & 0.0966     & 0.0860      \\
			E$^2$CP-DenseSift     & 0.2701     &   0.2742   &   0.2661    & 0.2237     & 0.2267     & 0.2212      \\
            E$^2$CP-annot/tags    & 0.6153     &   0.6259   &   0.6051    &{\it 0.4533}&{\it 0.4650}&{\it 0.4364} \\
			E$^2$CP-Hsv           & 0.3409     &   0.3459   &   0.3321    & 0.0790     & 0.0811     & 0.0750      \\
			E$^2$CP-Gist          & 0.2150     &   0.2198   &   0.2115    & 0.1622     & 0.1726     & 0.1582      \\
			E$^2$CP-RgbV3H1       & 0.3111     &   0.3162   &   0.3054    & 0.1037     & 0.1060     & 0.1009      \\
			E$^2$CP-HarrisHueV3H1 & 0.2608     &   0.2652   &   0.2513    & 0.0992     & 0.0966     & 0.1008      \\
			E$^2$CP-HsvV3H1       & 0.3415     &   0.3491   &   0.3351    & 0.0910     & 0.0947     & 0.0860      \\
			\hline
			CPCP-3Views           &{\bf 0.6892}&{\bf 0.6954}&{\bf 0.6845}&{\bf 0.4804}&{\bf 0.4823}&{\bf 0.4725}\\
            CPCP-8Views           &{\it 0.6358}&{\it 0.6400}&{\it 0.6301}& 0.3973     & 0.4009     & 0.3925	  \\	
			\hline
		\end{tabular}
\end{table*}

In order to evaluate the performance of clustering of these method, we adopt the Normalized Mutual Information (NMI) as the measure to compare the clustering result with the given ground-truth. As we mentioned above, PASCAL VOC'07 is a multi-label data set, in which many images has more than one categories in the ground-truth. The multi-label ground-truth makes it difficult to find a matching between the clustering result and the ground-truth.
To deal with such difficulty, we copy the multi-label data instances and separate their labels.  Concretely, assume a data instance $u_i$ with three labels $A, B \text{ and } C$, which can be expressed as a pair $(u_i, \{A, B, C\})$. We separate this data-label pair into three distinct pairs $(u_i, A), (u_i, B) \text{ and }(u_i, C)$ to generate a new single-label ground-truth. Similarly, we make two copies of the corresponding clustering result. For instance, if the clustering result is $(u_i, A)$, we will regard it as three different ones $(u_i, A), (u_i, A) \text{ and } (u_i, A)$ in the new clustering result. In this case, the data instance are seen to be clustered correctly, and the accuracy is 1/3. We can notice that with this trick the accuracy or NMI will be less than 1, even every data sample is clustered correctly. 
We divide the evaluation result by the ideal score to normalize it from 0 to 1. We call this method multi-label augmentation.

In our experiments, we impose a fixed parameter selection criteria, because there is no validation data set in the clustering tasks. We set the size of k-NN neighborhood to be $k = \text{Round}(\text{log}_2(n/c))$, where $n$ is the number of the data instances and $c$ is the number of classes. The propagation parameter $\eta = 0.25$. The embedding dimension in the spectral clustering is $c+1$. 


\subsection{Clustering Performance Evaluation}

In this subsection we will report the performance of  CPCP and the comparison methods on clustering experiments. 

The clustering results of 3 views are shown in Fig. \ref{fig:3viewnmi}. In this experiment, we use the first three views listed in Tab. \ref{tab_data} of two data sets, including two image features and one textual features. We implement two versions of Multi-Modal Constraint Propagation in this 3-views experiment. One is MMCP, which we set the prior graph probabilities of 3 views to 0.2, 0.05 and 0.75 proposed by the authors of MMCP \cite{fu2011multi}. Considering in practice it is difficult to decide the importance of a view manually, the other version is MMCP with same weight (MMCP-SW), which means that we assign each views the same importance and each prior probability is 1/3. Since our CPCP can generate an unified affinity graph after the propagation, the experiments in this section only considering the unified affinity graph. In MMCP, there is a unified graph, into which we can incorporate the propagation result $F$.
As to E$^2$CP, we impose the constraint propagation on each view and fuse each affinity graph after the propagation by linear combination. The number of pairwise constraints used for constrained clustering is 
from 0.01\% to 0.08\% of the total number of pairwise constraints in both data sets. From Fig. \ref{fig:3viewnmi} we can see that our approach CPCP significantly outperforms the other methods on both data sets in this 3-views experiment.

The clustering results of 8 views are shown in Fig. \ref{fig:8viewnmi}. In this experiment we use all the views listed in Table \ref{tab_data}. The details of the 8-views experiment is highly similar to the 3-views case. One difference is that, when we have 8 views, it is almost impossible to decide the importance of each view by hand. Therefore in this experiment, there is only one version of MMCP. Every view in MMCP has the same prior graph probability 1/8. From the figure, we can see that, the task of clustering is becoming harder while the number of views is increasing. Meanwhile, the advantage of our approach is more obvious than the 3-views case.

We also compare our approach CPCP with the constraint propagation on single view. As Tab. \ref{tab_e2cp} shows, we adopt E$^2$CP on eight views  with 20000 pairwise constraints. CPCP-3Views gives the clustering results of CPCP on the first three views, which is also shown in Fig. \ref{fig:3viewnmi}. Similarly CPCP-8Views gives the clustering results of 8-views case. The {\it annot} and the {\it tags} are two different names of the textual feature in Corel 5k and PASCAL VOC'07, thus we write them in the same row. We can see that in both data sets the textual feature has the best performance. 
While the results of the features in PASCAL VOC'07 except textual feature are not satisfactory. Empirically, we have no idea that which view is a satisfactory representation. It makes the results of multi-view clustering worse when we keep increasing the number of views, since some views which do not have the discriminative representations can be regarded as the noise. That is the reason why our CPCP has the best performance when we use 3 views, but when we use 8 views CPCP only has the second best perform in Corel 5k and the third best in PASCAL VOC'07.

\begin{figure}[t]
    \centering
    \includegraphics[width = 0.8\columnwidth]{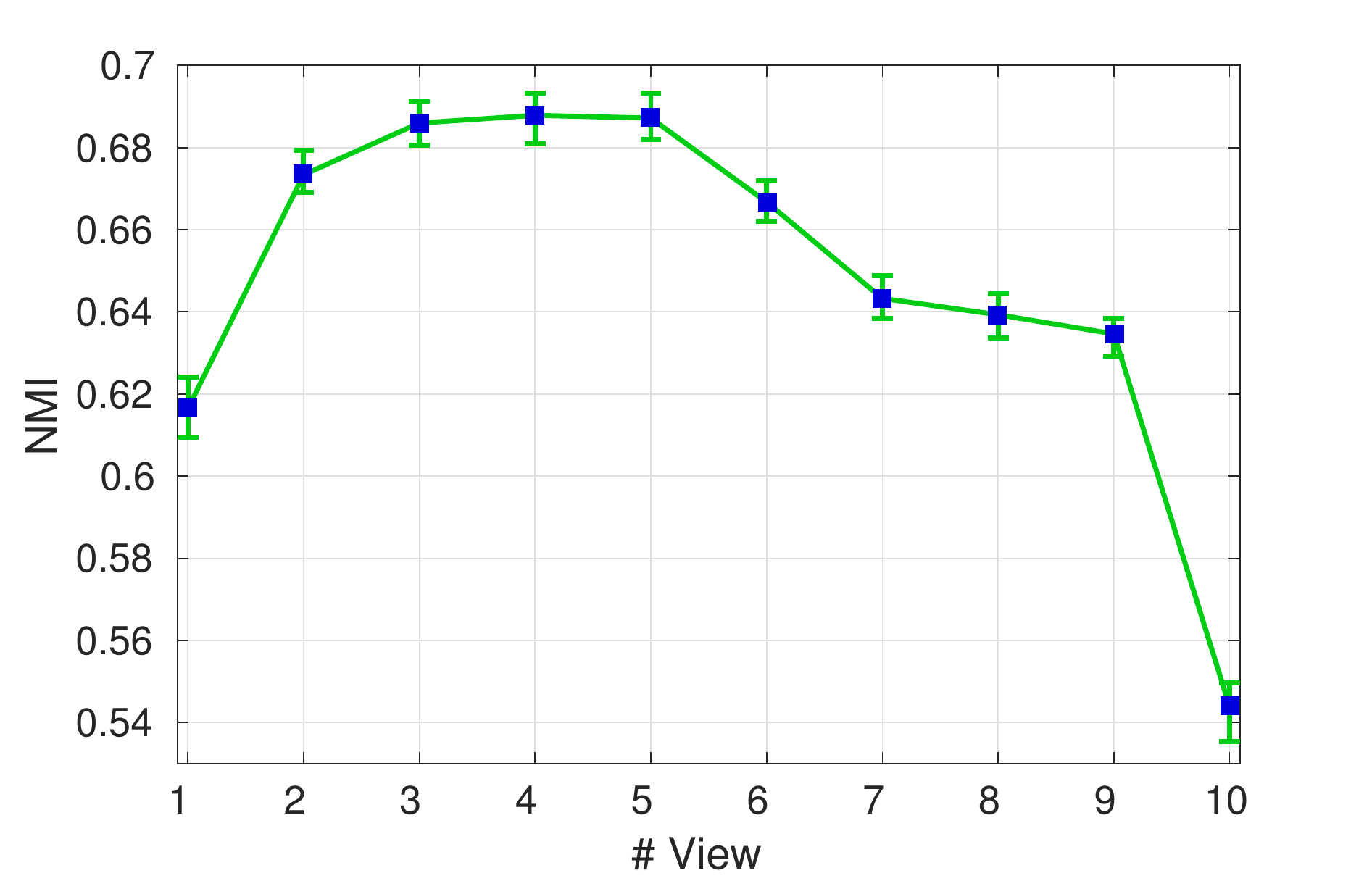}
    \caption{Clustering results on Corel 5k data set with different number of views.}
    \label{fig:corel5k_1_10}
\end{figure}

\subsection{View Selection}

Because of the ability of weighting the importance in the multi-view problem, we can regard the consensus information of CPCP as a guidance in the view selection. With the consensus information, the marginal probability $P(G_s)$ of the each graph can be generated immediately. After sorting the marginal probabilities, we can eliminate the view with the smallest probability. Fig. \ref{fig:corel5k_1_10} shows the clustering results when impose CPCP on different number of views. In this experiment, we selection 10 views as the initialization ({\it annot, Lab, DenseSift, Hsv, Gist, RgbV3H1, HarrisSiftV3H1, HsvV3H1, HarrisSift, DenseSiftV3H1}). By the strategy mentioned above, we eliminate the one view at the time leaving the views which have more contributions to the clustering. As the figure demonstrates, the NMI increases rapidly when we remove the one view at the first time. The result of clustering has a continuous improvement as we remove the views one by one. Here we have the best performance with four views, and if we keep decrease the number of views the NMI will go down. Although there is no criteria that can gives a proper number of views, it is possible to eliminate some worst views which will corrupt the clustering performance by considering  consensus information.

\section{Conclusions}

In this paper, we present a novel multi-view constraint propagation approach, called Consensus Prior Constraint Propagation. In our method, the unified affinity matrix after constraint propagation is produced from consensus information of each data instance, and the imbalance between positive and negative constraints is solved.
Extensive experiments demonstrate the superiority if the proposed method CPCP.

\bibliographystyle{aaai} 
\bibliography{bibfile}
\end{document}